\newtheorem{theorem}{Theorem}
\newtheorem{lemma}[theorem]{Lemma}
\newtheorem{prop}{Proposition}
\newtheorem{Ass}{Assumption}
\newtheorem{remark}{Remark}
\def\BibTeX{{\rm B\kern-.05em{\sc i\kern-.025em b}\kern-.08em
    T\kern-.1667em\lower.7ex\hbox{E}\kern-.125emX}}
\begin{document}

\title{Time-Varying Optimization for Streaming Data \\Via Temporal Weighting}

\author{%
  Muhammad Faraz Ul Abrar\textsuperscript{*}, 
  Nicol\`{o} Michelusi\textsuperscript{*}, 
  and Erik G. Larsson\textsuperscript{\dag} \\[1ex]
  \textsuperscript{*}School of Electrical, Computer and Energy Engineering, Arizona State University\\
  \textsuperscript{\dag}Department of Electrical Engineering (ISY), Link\"oping University\\[1ex]
  Emails: {mulabrar@asu.edu, nicolo.michelusi@asu.edu, 
                   erik.g.larsson@liu.se}
                   
\thanks{This research was funded in part by NSF under grant CNS-$2129015$. The work of E. G. Larsson was supported in part by ELLIIT, VR, and the KAW foundation. }}

\maketitle
\setulcolor{red}
\setul{red}{2pt}
\setstcolor{red}

\begin{abstract}
Classical optimization theory deals with fixed, time-invariant
objective functions. However, time-varying optimization has emerged as an important subject for decision-making in dynamic environments. In this work, we study the problem of learning from streaming data through a time-varying optimization lens. Unlike prior works that focus on generic formulations, we introduce a structured, \emph{weight-based} formulation that explicitly captures the streaming-data origin of the time-varying objective, where at each time step, an agent aims to minimize a weighted average loss over all the past data samples. We focus on two specific
weighting strategies: (1) uniform weights, which treat all samples
equally, and (2) discounted weights, which geometrically decay the influence of older data. For both schemes, we derive tight bounds on the ``tracking error'' (TE), defined as the deviation between the model parameter and the time-varying optimum at a given time step, under gradient descent (GD) updates. We show that under uniform weighting, the TE vanishes asymptotically with a $\mathcal{O}(1/t)$ decay rate, whereas discounted weighting incurs a nonzero error floor controlled by the discount factor and the number of gradient updates performed at each time step. Our theoretical findings are validated through numerical simulations.
\end{abstract}









\section{Introduction}

The deployment of machine learning (ML) solutions has surged across
diverse domains with applications in autonomous vehicles, robotics,
telecommunications, power grids, and cyber-physical
systems. Conventional ML optimizes a static objective, and therefore inherently assumes a static data distribution. Yet, real-world solutions operate under dynamically
evolving environments and must continuously adapt to the streaming
information
\cite{Time_Structured,Ali_Sayed2014Adaptation,CL_Survey}. Examples
include tracking a moving robot, localizing a mobile target, portfolio
optimization, risk management in fluctuating financial markets, and
adapting a controller with time-varying system dynamics. From a
learning perspective, this leads to a streaming data setting in which
the objective function evolves over time, resulting in a
non-stationary optimization problem. The goal then becomes to track
the optimum of a \emph{time-varying} objective function $F_t(\cdot)$:
\cite{Polyak_book,tracking_minimum_1998,Time_Structured}:
\begin{align}
\overline{\mathbf{w}}^*_t &= \arg \min_{\mathbf{w}\in \mathbb{R}^d} F_t(\mathbf{w}), \quad t\geq0.
\label{generic_TV}
\end{align}
This class of problems is typically approached using iterative
optimization techniques such as gradient descent (GD) or the Newton
method. 
However, due to computational constraints in real-time or resource-limited settings \cite{Time_Structured,DallAnese2019_stream}, only a limited number of updates can typically be performed at each time step, preventing exact tracking of $\overline{\mathbf{w}}^*_t$.
Consequently, a widely adopted performance metric in this
context is the ``tracking error" (TE) $\Vert
\mathbf{w}_t - \overline{\mathbf{w}}^*_t \Vert$, defined as the distance between the current iterate $\mathbf{w}_t$ and the time-varying optimum $\overline{\mathbf{w}}_t^*$.

Early work in \cite{tracking_minimum_1998}
proposed a Newton-type
algorithm leveraging second-order information to achieve exponential convergence in gradient norm for time-varying objectives. Subsequent studies have shown that for $\mu$-strongly convex and $L$-smooth objectives, GD can track the time-varying minimizer $\overline{\mathbf{w}}_t^*$ within an $\mathcal{O}(C)$ neighborhood, assuming a uniform bounded drift condition $\|\overline{\mathbf{w}}^*_{t+1} -
\overline{\mathbf{w}}^*_t\| \leq C$ \cite{Polyak_book,Time_Structured}. A different condition was later utilized in \cite{Popkov}, where convergence was established under the assumption $\|\nabla F_{t+1}(\mathbf{w})
- \nabla F_{t}(\mathbf{w})\| \leq C$ for all $\mathbf{w} \in \mathbb{R}^d$. Such “correction‐only” schemes rely solely on gradient- or Newton-type updates to track the drifting minimizer \cite{Time_Structured}.  By contrast,
prediction–correction schemes first forecast the next optimizer $\widehat{\mathbf{w}}_{t+1}$ using information up to time $t$ (for instance, via first-order optimality condition), and then apply gradient- or Newton-type correction steps once $F_{t+1}$ is revealed \cite{Class_Prediction_Correction}. Other approaches for prediction-based algorithms include the Kalman filter-based linear estimation and neural network-based non-linear estimation
\cite{Parameter_based_prediction}. More recently, time-varying
optimization has been studied in the distributed and decentralized
settings (e.g.,
\cite{Simonetto2016DecentralizedPM,Distributed_TV_Quadratic,D_cont_TV,hu2024energy}). However, most of these works focus on the generic formulation \eqref{generic_TV},
neglecting the inherent structure of streaming data, which may result in loose bounds on the TE.

A closely related paradigm is \emph{continual learning} (CL), which focuses on learning an ML model on a sequence of ``tasks'' \cite{Learn_without_forget,CL_Review,CL_Survey,CL_Survey_Defy_Forget}. Unlike conventional learning, CL assumes no access to future task data and only limited access to past task data samples. This constraint gives rise to the challenge of ``catastrophic forgetting", where learning new tasks degrades performance on earlier ones \cite{Catastrophic_McCloskey}. 
Although CL also addresses the challenge of adapting models to evolving data, existing approaches remain largely empirical.

In this work, we bridge time‐varying optimization and CL from a
theoretical standpoint. Specifically, we propose a structured
formulation in which the objective at time $t$ is defined as a
weighted average of the losses on all past samples. By explicitly
encoding temporal relevance in the weights, our framework captures the
streaming data structure and permits tight, weight-specific bounds on
TE performance. We discuss two natural weighting strategies: first, uniform weights, which assign equal weight to all past samples and model a stationary environment; second, \emph{discounted
weights}, which geometrically decay past sample contributions, prioritizing recent observations. For both schemes, we characterize the TE under GD updates. Exploiting the
streaming‐data structure yields sharper TE bounds than
the existing generic time‐varying analyses. Specifically, with uniform
weights, we prove that the TE decays as $\mathcal{O}(1/t)$. Under
discounted weights, we derive an explicit, nonzero asymptotic TE bound
that quantifies the impact of the discount factor and the number of GD
iterations per time step.

The rest of the paper is organized as follows. Section \ref{sec:system_model} presents the system model. Section \ref{sec:TE_analysis} develops the TE analysis for the uniform and discounted weighting schemes. Section \ref{sec:results} reports numerical results, and Section \ref{sec:conclusion} concludes the work.

\section{System Model}
\label{sec:system_model}
We consider a learning setup where a single agent (e.g., an edge or cloud server) aims to track a time-varying ML model parameter. In particular, we assume that data arrives sequentially in an online streaming fashion, where at every iteration $t{\geq}1$, the agent receives a new data sample 
$(\mathbf{x}_t, y_t)$. Here, $ \mathbf{x}_t$  denotes the feature vector and $ y_t$ represents the corresponding label. Let $f_t(\cdot)$ denote the loss associated with the data sample arrived in iteration $t$, such as the cross-entropy loss evaluated on the $t$-th sample. The goal is to obtain an ML model that minimizes the \emph{weighted} average loss over the data accumulated thus far, formulated as:
\begin{align}
   \overline{\mathbf{w}}^*_t = \arg \min_{\mathbf{w} \in \mathbb{R}^d} F_t(\mathbf{w}), \text{where } F_t(\mathbf{w})\triangleq \sum_{i=1}^t a_i(t) f_i(\mathbf{w}).
    \label{average_time_vary_prob}
\end{align}
Here, $a_i(t)$ are the weights associated with $f_i(\cdot)$ assigned at iteration $t$. We assume that $ 0 \leq a_i(t)\leq 1 $ for all $i$ and $t$ and $\sum_{i=1}^t a_i(t) =1$ for all $t$, so that $F_t(\cdot)$ represents a convex combination of the individual losses associated to each data point accumulated up to time $t$. We note that, unlike the generic time‐varying formulation \eqref{generic_TV}, \eqref{average_time_vary_prob} explicitly captures the fact that the objective arises from streaming data.

We employ the gradient descent (GD) method with fixed step size to solve \eqref{average_time_vary_prob}. For each time step $t$, the agent performs $E$ gradient updates, which are initialized with $\mathbf{w}_{t,0} = \mathbf{w}_{t}$. These updates are of the form:
\begin{align}
\label{local_GD}
\mathbf{w}_{t,k+1} &= 
\mathbf{w}_{t,k} - \eta \nabla F_{t+1}(\mathbf{w}_{t,k}) \\&
=\mathbf{w}_{t,k} - \eta \sum_{i=1}^{t+1} a_i(t+1) \nabla f_i(\mathbf{w}_{t,k})\,, \label{local_GD_1}
\end{align}
for $k = 0,1,\cdots,E-1$, where $\eta$ represents the learning step size. Finally, the updated model is obtained as $\mathbf{w}_{t+1} = \mathbf{w}_{t,E}$.
Note that the model update in \eqref{local_GD_1} requires computing the gradients for all historical samples $\{\nabla f_i(\cdot)\}_{i \leq t+1}$. The focus of this work is to understand the fundamental limits of structured time-varying learning, and therefore, we do not assume memory constraints. Under this assumption, $\nabla F_{t+1}(\cdot)$ can be exactly computed at each time step. Analysis of memory-constrained time-varying learning is reserved for future investigation.
Since the global objective $F_t(\cdot)$ keeps evolving over time, we characterize the learning performance using the ``tracking error" (TE), defined as:
\begin{align}
    \text{TE}(t) =  \Vert \mathbf{w}_t - \overline{\mathbf{w}}^*_t  \Vert, 
\end{align}
for all $t \geq 1$. Here we are interested in analyzing how the TE evolves over time, and its dependence on $E$. 
We are also interested in the asymptotic tracking error (ATE) $\triangleq \limsup_{t \to \infty} \Vert \mathbf{w}_t - \overline{\mathbf{w}}^*_t  \Vert$\footnote{Note that we used $\limsup$, since the limit may not exist.} to understand the convergence behavior of model updates in \eqref{local_GD} and the corresponding rate of convergence. These questions are addressed next.

\section{Tracking Error Analysis}
\label{sec:TE_analysis}
In this section, we characterize the tracking error associated with the GD updates in \eqref{local_GD} to solve the time-varying learning problem in \eqref{average_time_vary_prob} for a given budget on the number of GD iterations $E$. In particular, we investigate the TE for two special choices of weights $\{a_i(t)\}$: (a) uniform weights, where each sample is assigned equal weights, and (b) discounted weights, in which past samples are assigned geometrically discounted weights. The first case models a stationary data environment in which all past data samples are equally important. In contrast, the latter captures a scenario in which recent observations are deemed more relevant than older ones, as is common in evolving data dynamics.
The TE for both choices is analyzed under the following assumptions:
\begin{Ass}
\label{ass:smooth_sc}
The loss function associated with each data sample is $L$-smooth (has Lipschitz-continuous gradients) and $\mu$-strongly convex, that is, for all $t = 1,2,\cdots,$ $f_t(\cdot)$ satisfies
\begin{align}\| \nabla f_t(\mathbf{x}) - \nabla f_t(\mathbf{y})\| \leq L\| \mathbf{x} - \mathbf{y}\|,\end{align}
\begin{align}
f_t(\mathbf{y}) \geq f_t(\mathbf{x}) + \nabla f_t(\mathbf{x})^\top(\mathbf{y} - \mathbf{x}) + \frac{\mu}{2}\| \mathbf{y} - \mathbf{x} \|^2,  
\end{align}  
for all $\mathbf{x},\mathbf{y} \in \mathbb{R}^d$. Since $F_t(\cdot)$ is a convex combination of the losses up to time $t$, it is also $L$-smooth and $\mu$-strongly convex for all $t \geq 1$.
\end{Ass}
\begin{Ass}
\label{ass:bounded_minimizers}
The minimizers of the data sample losses, $\mathbf{w}^*_t = \arg \min_{\mathbf{w} \in \mathbb{R}^d} f_t(\mathbf{w)}$, are uniformly bounded, that is, $\exists \;C >0$ such that $\Vert\mathbf{w}^*_t \Vert \leq C $, for all $t$. 
\end{Ass}
Assumption~\ref{ass:smooth_sc} is standard in analyzing gradient-based methods (see, e.g., \cite{Nesterov,boyd2004convex}). 
In contrast, Assumption~\ref{ass:bounded_minimizers} (bounded minimizers) is more specific. This assumption 
was used before in the analysis of time-varying optimization for the federated learning setting (see Assumption 5 in \cite{hu2024energy}) and is required to make the global optimizer
$\overline{\mathbf{w}}^*_t$ drift slowly enough with $t$. In fact, as we will show in \eqref{minimizer_drift_uniform} and \eqref{minimizer_drift_discount}, it implies the often‐invoked “bounded drift” assumption  $\|\overline{\mathbf{w}}^*_{t+1}-\overline{\mathbf{w}}^*_t\|\le \bar{C}$
commonly adopted in prior time‐varying optimization work (e.g. \cite{Time_Structured}),
but is a more primitive and transparent structural assumption on the problem. 

Under Assumption ~\ref{ass:smooth_sc}, the TE at iteration $t+1$ can be analyzed as follows. From \eqref{local_GD}, we have
\begin{align}
&\text{TE}(t+1)
=\Vert \mathbf{w}_{t+1} - \overline{\mathbf{w}}^*_{t+1}  \Vert = \Vert \mathbf{w}_{t,E} - \overline{\mathbf{w}}^*_{t+1}  \Vert\\
&=\Vert \mathbf{w}_{t,E-1} - \overline{\mathbf{w}}^*_{t+1}
- \eta[\nabla F_{t+1}(\mathbf{w}_{t,E-1}) -\nabla F_{t+1}(\overline{\mathbf{w}}^*_{t+1})
]
\Vert  \nonumber\\
&\overset{(a)}{\leq} (1 - \eta \mu) \Vert \mathbf{w}_{t,E-1} - \overline{\mathbf{w}}^*_{t+1}\Vert \nonumber\\
&= (1 - \eta \mu) \Big\Vert \mathbf{w}_{t,E-2}-  \overline{\mathbf{w}}^*_{t+1}\\&
-\eta[\nabla F_{t+1}(\mathbf{w}_{t,E-2})
-\nabla F_{t+1}(\overline{\mathbf{w}}^*_{t+1})]
\Big\Vert \nonumber\\
&\overset{(b)}{\leq} (1 - \eta \mu)^2 \Vert \mathbf{w}_{t,E-2} - \overline{\mathbf{w}}^*_{t+1}\Vert  \nonumber\\
    &\hspace{2mm}\vdots
    \nonumber\\
    &\leq(1 - \eta \mu)^E \Vert \mathbf{w}_{t} - \overline{\mathbf{w}}^*_{t+1}\Vert,
    \label{TE_analysis}
\end{align}
where $\mathbf{w}_{t,0} = \mathbf{w}_t$. Steps $(a)$ and $(b)$ leverage the fact that $\nabla F_{t+1}(\overline{\mathbf{w}}^*_{t+1})=\mathbf 0$ along with Lemma~\ref{lemma:MHT} (provided in the Appendix), which exploits the $\mu$-strong convexity and $L$-smoothness of $F_{t+1}(\cdot)$ (Assumption~\ref{ass:smooth_sc}) for a learning step size choice $\eta \in (0, \frac{2}{\mu + L}]$. The final inequality in \eqref{TE_analysis} is obtained using induction over $E$ gradient updates. Using the triangle inequality, we can further bound the error term in \eqref{TE_analysis} as
\begin{align*}    
\Vert \mathbf{w}_{t} - \overline{\mathbf{w}}^*_{t+1}\Vert&\leq\Vert \mathbf{w}_{t} - \overline{\mathbf{w}}^*_{t}\Vert
+
\Vert\overline{\mathbf{w}}^*_{t+1}-\overline{\mathbf{w}}^*_{t}\Vert\\
&=\text{TE}(t)+
\Vert\overline{\mathbf{w}}^*_{t+1}-\overline{\mathbf{w}}^*_{t}\Vert.
\end{align*}
Hence, continuing from 
\eqref{TE_analysis} and letting $\alpha \triangleq (1-\eta \mu)^E$, we obtain the following recursive expression on the TE:
\begin{align}
\text{TE}(t+1)& \leq \alpha \left( \text{TE}(t)+\left\Vert \overline{\mathbf{w}}^*_{t+1} - \overline{\mathbf{w}}^*_{t} \right\Vert \right). \label{TE_analysis_1}
\end{align}
\begin{remark}
    The expression in \eqref{TE_analysis_1} reveals that the TE follows a contracting sequence (since $\alpha <1$) except for the presence of an extra term $\alpha \Vert \overline{\mathbf{w}}^*_{t+1} - \overline{\mathbf{w}}^*_{t} \Vert$ capturing the scaled drift of the minimizers of the time-varying objectives.
\end{remark}
Using recursion, the TE in \eqref{TE_analysis_1} can be further expressed as 
\begin{align}
     \text{TE}(t)&\leq \alpha^{t-1}\text{TE}(1) + \sum_{i=1}^{t-1}\alpha^{t-i} \left\Vert \overline{\mathbf{w}}^*_{i+1} - \overline{\mathbf{w}}^*_{i} \right\Vert\\
     &\leq
\alpha^{t}\Vert \mathbf{w}_{0} - \overline{\mathbf{w}}^*_{1}\Vert+ \sum_{i=1}^{t-1}\alpha^{t-i} \left\Vert \overline{\mathbf{w}}^*_{i+1} - \overline{\mathbf{w}}^*_{i} \right\Vert,\label{TE_recursive}
\end{align}
where we bounded TE$(1)$ via \eqref{TE_analysis} to obtain the last inequality. The above bound on the TE contains two components: 1) the impact of the initialization, which diminishes geometrically with $t$, and 2) the error accumulated due to the drift of the minimizers of the time-varying objective functions.
Next, we proceed to bound the drift of the minimizers $\left\Vert \overline{\mathbf{w}}^*_{i+1} - \overline{\mathbf{w}}^*_{i} \right\Vert$. Utilizing the $\mu$-strong convexity of $F_{i+1}(\cdot)$, we obtain:
\begin{align}
    \Vert  \overline{\mathbf{w}}^*_{i+1} - \overline{\mathbf{w}}^*_{i} \Vert &\leq \frac{1}{\mu} \Vert \nabla F_{i+1}( \overline{\mathbf{w}}^*_{i+1}) -\nabla F_{i+1}( \overline{\mathbf{w}}^*_{i})\Vert \nonumber\\ 
    &= \frac{1}{\mu} \Vert \nabla F_{i+1}( \overline{\mathbf{w}}^*_{i})\Vert, \label{minimizer_drift}
\end{align}
where the equality follows due to the optimality condition $ \nabla F_{i+1}( \overline{\mathbf{w}}^*_{i+1}) = \mathbf{0}$. Since $F_t(\cdot)$ and the corresponding minimizer $\overline{\mathbf{w}}^*_{t}$  depends on the choice of weights $\{a_i(t)\}$, further bounding the minimizer drift in \eqref{minimizer_drift} necessitates specializing the analysis to specific weighting schemes, which is done in the next two subsections for uniform and discounted weights, respectively.

\subsection{Uniform Weights}
A natural strategy is to assign uniform weights to each data sample observed thus far, and hence set $a_i(t) = 1/t$ for all $i = 1,\cdots,t$, and for all $t$. In this case, we can express $F_{t+1}(\cdot)$ for any $t \geq 1$ as
\begin{align}
    F_{t+1}(\mathbf{w}) &= \sum_{i=1}^{t+1} \frac{f_i(\mathbf{w})}{t+1}     \nonumber
    =  \frac{1}{t+1} \sum_{i=1}^t  f_i(\mathbf{w}) + \frac{1}{t+1} f_{t+1}(\mathbf{w})\nonumber\\
    &= \frac{t}{t+1} F_t(\mathbf{w}) + \frac{1}{t+1}f_{t+1}(\mathbf{w}). \label{Recursive_uniform}
\end{align}
Continuing from \eqref{minimizer_drift} and using \eqref{Recursive_uniform}, we can bound the minimizer drift 
as
\begin{align}
    \Vert  \overline{\mathbf{w}}^*_{t+1} - \overline{\mathbf{w}}^*_{t} \Vert  &\leq \frac{1}{\mu} \left\Vert \frac{t}{t+1} \nabla F_t(\overline{\mathbf{w}}^*_{t}) + \frac{1}{t+1}\nabla f_{t+1}(\overline{\mathbf{w}}^*_{t})\right\Vert \nonumber \\
    &= \frac{1}{\mu(t+1)} \left\Vert \nabla f_{t+1}(\overline{\mathbf{w}}^*_{t})\right\Vert \nonumber  \\
     &\overset{(a)}{\leq} \frac{L}{\mu(t+1)} \left\Vert \overline{\mathbf{w}}^*_{t} -\mathbf{w}_{t+1}^*\right\Vert \nonumber\\
     &\overset{(b)}{\leq} \frac{L}{\mu(t+1)} \left\Vert \overline{\mathbf{w}}^*_{t}\right\Vert + \frac{L}{\mu(t+1)} \left\Vert\mathbf{w}_{t+1}^*\right\Vert \nonumber\\
     &\overset{(c)}{\leq}  \frac{1}{t+1} \bigg(1 + \sqrt{\frac{L}{\mu}}\bigg) \frac{L C}{\mu}=\frac{C'}{t+1},\label{minimizer_drift_uniform}
\end{align}
where the equality follows due to the optimality condition $ \nabla F_{t}( \overline{\mathbf{w}}^*_{t}) = \mathbf{0}$,
and in the last step we defined $C' \triangleq \Big(1{+} \sqrt{\frac{L}{\mu}}\Big) \frac{L C}{\mu}$.
 Step $(a)$ follows from $L$-smoothness of $f_{t+1}(\cdot)$ (Assumption~\ref{ass:smooth_sc}), $(b)$ uses the triangle inequality, and $(c)$ utilizes Assumption~\ref{ass:bounded_minimizers} and Lemma ~\ref{lemma:glob_minimizer_bnd} provided in the Appendix. Using the bound on the minimizer drift in \eqref{minimizer_drift_uniform} along with \eqref{TE_recursive}, the TE for uniform weights can be expressed as 
\begin{align}
    \text{TE}(t)&\leq \alpha^{t}\Vert \mathbf{w}_{0} - \overline{\mathbf{w}}^*_{1}\Vert+ C'\sum_{i=1}^{t-1}  \frac{\alpha^{t-i}}{i+1
    }. \label{uniform_TE}
\end{align}
Next, we will show that for large $t$, the sum in \eqref{uniform_TE} admits an $\mathcal{O}(1/t)$ upper bound.
\begin{prop}
\label{prop:uniform_sum}
Define $S(t)\;\triangleq\;\sum_{i=1}^{t-1}\frac{\alpha^{t-i}}{\,i+1\,}\,,$ and let $A=\max\{t_0 S(t_0), \frac{2\alpha}{1-\alpha}\}$, with  $t_0 = \lceil \frac{2\alpha}{1-\alpha}\rceil$. Then for all $t\geq t_0 $, 
$$S(t)\;\leq\;\frac{A}{t}.$$ 
\end{prop}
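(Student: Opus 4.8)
The plan is to convert $S(t)$ into a first-order recursion and then close the bound by induction on $t$. First I would peel off the last term of the sum defining $S(t+1)$. Writing $S(t+1)=\sum_{i=1}^{t}\frac{\alpha^{t+1-i}}{i+1}$ and factoring out one power of $\alpha$, the terms with $i\le t-1$ reproduce $\alpha S(t)$, while the $i=t$ term contributes $\frac{\alpha}{t+1}$. This yields the clean recursion
\begin{align}
S(t+1)=\alpha\,S(t)+\frac{\alpha}{t+1}, \nonumber
\end{align}
which drives the entire argument.

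With this recursion in hand, I would prove $S(t)\le A/t$ for all $t\ge t_0$ by induction. The base case $t=t_0$ is immediate: since $A\ge t_0 S(t_0)$ by definition of $A$, we have $S(t_0)\le A/t_0$. For the inductive step, assuming $S(t)\le A/t$, I substitute into the recursion to obtain $S(t+1)\le \frac{\alpha A}{t}+\frac{\alpha}{t+1}$, so it suffices to show this upper bound is at most $\frac{A}{t+1}$. Clearing the (positive) denominators and simplifying, this reduces to the single scalar inequality $\frac{\alpha A}{t}+\alpha\le A(1-\alpha)$.

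The final step is to verify that this inequality holds for every $t\ge t_0$. Because its left-hand side is decreasing in $t$, it is enough to check it at $t=t_0$. Using $t_0\ge \frac{2\alpha}{1-\alpha}$ gives $\frac{\alpha}{t_0}\le \frac{1-\alpha}{2}$, hence $\frac{\alpha A}{t_0}\le \frac{(1-\alpha)A}{2}$, and the inequality collapses to $\alpha\le \frac{(1-\alpha)A}{2}$, i.e.\ $A\ge \frac{2\alpha}{1-\alpha}$ — which is exactly the second lower bound built into $A=\max\{t_0 S(t_0),\,\frac{2\alpha}{1-\alpha}\}$.

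I do not anticipate a genuine obstacle here; the argument is elementary once the recursion is identified. The only points requiring care are bookkeeping in nature: the two lower bounds on $A$ play distinct roles (the term $t_0 S(t_0)$ seeds the base case, while $\frac{2\alpha}{1-\alpha}$ closes the inductive step), and one must note the monotonicity in $t$ so that a single verification at $t=t_0$ certifies the inequality for all larger $t$.
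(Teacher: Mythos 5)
Your proposal is correct and follows essentially the same route as the paper's proof: the identical recursion $S(t+1)=\alpha S(t)+\frac{\alpha}{t+1}$, the same induction with $t_0 S(t_0)$ seeding the base case, and the same reduction of the inductive step to $A\bigl(1-\alpha-\frac{\alpha}{t}\bigr)\geq\alpha$, closed via $t\geq t_0\geq\frac{2\alpha}{1-\alpha}$ and $A\geq\frac{2\alpha}{1-\alpha}$. Your monotonicity-in-$t$ remark is just a slight repackaging of the paper's direct bound $1-\alpha-\frac{\alpha}{t}\geq\frac{1-\alpha}{2}$ for all $t\geq t_0$, so there is no substantive difference.
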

\begin{proof}
We can express $S(t+1)$ as
    \begin{align}
        S(t+1) &=\sum_{i=1}^{t}  \frac{\alpha^{t+1-i}}{i+1}= \frac{\alpha}{t+1} + \sum_{i=1}^{t-1}  \frac{\alpha^{t+1-i}}{i+1}\nonumber\\
        &= \frac{\alpha}{t+1} + \alpha \sum_{i=1}^{t-1}  \frac{\alpha^{t-i}}{i+1} = \frac{\alpha}{t+1} + \alpha S(t).\label{Uniform_Sum_recursive}
    \end{align}
By the definition of $A$, we have $A \geq t_0S(t_0)$; therefore it directly holds that $S(t_0) \leq \frac{A}{t_0}$. Next, as induction hypothesis we assume that $S(t) \leq \frac{A}{t}$ for some $t\geq t_0 = \lceil \frac{2\alpha}{1-\alpha}\rceil$. 
    Then, $S(t+1)$ can be upper bounded from \eqref{Uniform_Sum_recursive} as $S(t+1) \leq \frac{\alpha}{t+1} + \alpha \frac{A}{t}$. To prove this, it is sufficient to show that the right-hand side above is bounded by $\frac{A}{t+1}$. Reorganizing the terms, this is equivalent to showing that 
    \begin{align*}
        A\big(1 - \alpha -\frac{\alpha}{t}\big)\geq\alpha.
    \end{align*}
    Since $t\geq t_0\geq 2\alpha/(1-\alpha)$, it follows that 
    $1-\alpha-\alpha/t\geq(1-\alpha)/2$, hence a sufficient condition to satisfy the previous condition is
    $
    A\geq \frac{2\alpha}{1-\alpha},
    $
which holds by definition of $A$.
    We have thus proved that $S(t+1)\leq A/(t+1)$. By induction, the bound holds for all $t \geq t_0$, completing the proof.
\end{proof}

Using Proposition~\ref{prop:uniform_sum}, the TE for the uniform weights in \eqref{uniform_TE} can be bounded as
\begin{align}
    \text{TE}(t)&\leq \alpha^{t}\Vert \mathbf{w}_{0} - \overline{\mathbf{w}}^*_{1}\Vert + C'\frac{A}{t}, \ \forall t\geq t_0,\label{uniform_TE_final}
\end{align}
where $A$ and $t_0$ are given in the statement of Proposition~\ref{prop:uniform_sum}.
\begin{remark} Since the first term in \eqref{uniform_TE_final} decays geometrically, it can be concluded using Proposition ~\ref{prop:uniform_sum} that the TE for the uniform weights decays as $\mathcal{O}\left(\frac{1}{t}\right)$ for sufficiently large $t$, and therefore a vanishing asymptotic TE, i.e., $\lim_{t\to\infty} \text{TE}(t) = 0$ is achieved. Moreover, we can lower bound $S(t)$ as $S(t) \geq \frac{1}{t}\sum_{i=1}^{t-1}\alpha^{t-i} = \frac{\alpha}{t}(\frac{1-\alpha^{t-1}}{1-\alpha})$, which also decays as $\mathcal{O}(\frac{1}{t})$ for $t$ large enough. The matching upper and lower bounds confirm that 
$\mathcal{O}(1/t)$ cannot be improved. Remarkably, the 
$\mathcal{O}(1/t)$ convergence to the time-varying minimizer $\overline{\mathbf{w}}^*_{t}$ holds even when the sequence $\{\overline{\mathbf{w}}^*_{t}\}$ itself is non-convergent. 
\end{remark}
\subsection{Discounted Weights}
Another strategy is to geometrically discount the samples observed in the past, i.e., $a_i(t) \propto \gamma^{t-i}$ for all $i\leq t$, where $0<\gamma<1$ is the discount factor. To ensure that $\sum_{i=1}^t a_i(t) =1$, we normalize the weights yielding:
\begin{align}
    a_i(t) = \frac{1-\gamma}{1-\gamma^t} \gamma^{t-i},\, \forall\; i =1,\cdots,t.
\end{align}
Accordingly, for the discounted weights, we can express the global objective at iteration $t+1$ as 
\begin{align}
    &F_{t+1}(\mathbf{w})= \sum_{i=1}^{t+1} \frac{1-\gamma}{1-\gamma^{t+1}} \gamma^{t+1-i} f_i(\mathbf{w})  \nonumber\\
    &= \sum_{i=1}^t \frac{1-\gamma}{1-\gamma^{t+1}} \gamma^{t+1-i} f_i(\mathbf{w})  + \frac{1-\gamma}{1-\gamma^{t+1}} f_{t+1}(\mathbf{w})  \nonumber\\
    &= \gamma \cdot \frac{1-\gamma^t}{1-\gamma^{t+1}} \left( \sum_{i=1}^{t} \frac{1-\gamma}{1-\gamma^t} \gamma^{t-i} f_i(\mathbf{w}) \right) + \frac{1-\gamma}{1-\gamma^{t+1}} f_{t+1}(\mathbf{w}) \nonumber\\
    &= \gamma \cdot \frac{1-\gamma^t}{1-\gamma^{t+1}} F_t(\mathbf{w}) + \frac{1-\gamma}{1-\gamma^{t+1}} f_{t+1}(\mathbf{w}).
    \label{Recursive_discount}
\end{align}
Using \eqref{Recursive_discount}, the minimizer drift in \eqref{minimizer_drift} specializes as
\begin{align}
     &\Vert  \overline{\mathbf{w}}^*_{t+1}{-} \overline{\mathbf{w}}^*_{t} \Vert{\leq} \frac{1}{\mu} \Big\Vert \frac{\gamma(1{-}\gamma^t)}{1{-}\gamma^{t+1}}  \nabla F_t(\overline{\mathbf{w}}^*_{t}) {+}  \frac{1{-}\gamma}{1{-}\gamma^{t+1}} \nabla f_{t+1}(\overline{\mathbf{w}}^*_{t})\Big\Vert \nonumber \\
    &= \frac{1-\gamma}{\mu(1-\gamma^{t+1})} \left\Vert \nabla f_{t+1}(\overline{\mathbf{w}}^*_{t})\right\Vert \leq  \frac{1-\gamma}{1-\gamma^{t+1}} 
     C'
     ,\label{minimizer_drift_discount}
\end{align}
with $C'$ defined as in \eqref{minimizer_drift_uniform}, where the equality uses the optimality condition $\nabla F_t(\overline{\mathbf{w}}^*_t)=0$, and the final
inequality follows from Assumptions \ref{ass:smooth_sc}–\ref{ass:bounded_minimizers}, Lemma \ref{lemma:glob_minimizer_bnd}, and the triangle inequality, in direct analogy to the steps used to obtain \eqref{minimizer_drift_uniform}. Using the minimizer drift bound in \eqref{minimizer_drift_discount}, the TE in \eqref{TE_recursive} specializes to
\begin{align}
    \text{TE}(t)\leq \alpha^{t}\Vert \mathbf{w}_{0} - \overline{\mathbf{w}}^*_{1}\Vert + C' (1-\gamma)\sum_{i=1}^{t-1} \frac{\alpha^{t-i}}{1-\gamma^{i+1}}.
    \label{discount_TE}
\end{align}
\begin{prop}
\label{prop:discount_sum}
Define $S(t) \triangleq \sum_{i=1}^{t-1}  \frac{(1-\gamma)\alpha^{t-i}}{1 - \gamma^{i+1}
    }$ and let $A_\gamma=\max\{\frac{(1-\gamma^{t_0})S(t_0)}{1-\gamma}, \frac{2\alpha}{1-\alpha}\}$, and  $ t_0= \lceil\ln(\frac{1-\alpha}{1+\alpha-2\gamma\alpha})/\ln(\gamma)\rceil$. Then for all $t\geq t_0 $ we have     
$$ S(t) \leq \frac{A_\gamma(1-\gamma)}{1-\gamma^t}. $$ 
Furthermore, $$\lim_{t \to \infty} S(t) = \frac{(1-\gamma)\alpha}{1-\alpha}.$$
\end{prop}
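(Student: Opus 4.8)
The plan is to mirror the proof of Proposition~\ref{prop:uniform_sum}: first establish a one-step recursion for $S(t)$, then prove the upper bound by induction, and finally evaluate the limit by a reindex-and-dominate argument. Peeling off the $i=t$ term in $S(t+1)=\sum_{i=1}^{t}\frac{(1-\gamma)\alpha^{t+1-i}}{1-\gamma^{i+1}}$ and factoring $\alpha$ out of the remaining sum yields
\[
S(t+1)=\frac{(1-\gamma)\alpha}{1-\gamma^{t+1}}+\alpha\,S(t),
\]
exactly analogous to \eqref{Uniform_Sum_recursive}. The base case $S(t_0)\le \frac{A_\gamma(1-\gamma)}{1-\gamma^{t_0}}$ is immediate from $A_\gamma\ge \frac{(1-\gamma^{t_0})S(t_0)}{1-\gamma}$. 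For the inductive step, assuming $S(t)\le\frac{A_\gamma(1-\gamma)}{1-\gamma^t}$ and substituting into the recursion, proving $S(t+1)\le\frac{A_\gamma(1-\gamma)}{1-\gamma^{t+1}}$ reduces, after dividing by $(1-\gamma)$ and multiplying by $(1-\gamma^{t+1})$, to
\[
A_\gamma\Big(1-\tfrac{\alpha(1-\gamma^{t+1})}{1-\gamma^t}\Big)\ge\alpha.
\]

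The key algebraic identity is $\frac{1-\gamma^{t+1}}{1-\gamma^t}=1+\frac{\gamma^t(1-\gamma)}{1-\gamma^t}$, which rewrites the bracket as $1-\alpha-\frac{\alpha\gamma^t(1-\gamma)}{1-\gamma^t}$. I expect the main obstacle to be verifying that the (rather opaque) choice of $t_0$ is exactly what forces this bracket to exceed $\frac{1-\alpha}{2}$: a short rearrangement shows that $1-\alpha-\frac{\alpha\gamma^t(1-\gamma)}{1-\gamma^t}\ge\frac{1-\alpha}{2}$ is equivalent to $\gamma^t\le\frac{1-\alpha}{1+\alpha-2\gamma\alpha}$, and since $\ln\gamma<0$ the definition of $t_0$ gives precisely $\gamma^{t_0}\le\frac{1-\alpha}{1+\alpha-2\gamma\alpha}$, hence the inequality for all $t\ge t_0$. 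Care with the sign of $\ln\gamma$ upon exponentiating is the one place an error could creep in. With the bracket bounded below by $\frac{1-\alpha}{2}$, the requirement becomes $A_\gamma\cdot\frac{1-\alpha}{2}\ge\alpha$, i.e.\ $A_\gamma\ge\frac{2\alpha}{1-\alpha}$, which holds by definition of $A_\gamma$, closing the induction.

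For the limit I would reindex via $j=t-i$ to write $S(t)=\sum_{j=1}^{t-1}\frac{(1-\gamma)\alpha^{j}}{1-\gamma^{\,t-j+1}}$. Since $t-j+1\ge 2$ on the summation range, each summand is dominated by $\frac{(1-\gamma)\alpha^{j}}{1-\gamma^{2}}=\frac{\alpha^{j}}{1+\gamma}$, which is summable and independent of $t$, while for each fixed $j$ the summand converges to $(1-\gamma)\alpha^{j}$ as $t\to\infty$. Dominated convergence for series then permits interchanging the limit and the sum, giving $\lim_{t\to\infty}S(t)=\sum_{j=1}^{\infty}(1-\gamma)\alpha^{j}=\frac{(1-\gamma)\alpha}{1-\alpha}$, as claimed. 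Alternatively, one can pass to the limit directly in the recursion, whose fixed point $L$ satisfies $L=(1-\gamma)\alpha+\alpha L$, yielding the same value.
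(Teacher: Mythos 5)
Your proof is correct, and while its inductive core coincides with the paper's, the limit evaluation takes a genuinely different route. For the induction: your recursion, base case, and reduction reproduce exactly the paper's condition \eqref{acond2}, and your explicit equivalence
\[
1-\alpha-\frac{\alpha\gamma^t(1-\gamma)}{1-\gamma^t}\;\geq\;\frac{1-\alpha}{2}
\quad\Longleftrightarrow\quad
\gamma^t\leq\frac{1-\alpha}{1+\alpha-2\gamma\alpha}
\]
is a slightly cleaner organization of this step than the paper's, which instead plugs the bound $\gamma^t\le\gamma^{t_0}\le\frac{1-\alpha}{1+\alpha-2\gamma\alpha}$ into the bracket and simplifies (implicitly using monotonicity of $x\mapsto\frac{x(1-\gamma)}{1-x}$); your version makes transparent that $t_0$ is precisely the threshold at which the bracket condition becomes the membership condition on $A_\gamma$, and you correctly handle the sign flip from $\ln\gamma<0$. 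Where you depart from the paper is the limit: the paper invokes its Appendix Lemma~\ref{lemma:convergent_seq}, a general convergence result for recursions $x_{t+1}=\alpha x_t+b_t$ with $b_t\to b^*$, applied with $b_t=\frac{(1-\gamma)\alpha}{1-\gamma^{t+1}}$, whereas you reindex to $S(t)=\sum_{j=1}^{t-1}\frac{(1-\gamma)\alpha^j}{1-\gamma^{t-j+1}}$ and apply dominated convergence for series (Tannery's theorem), with the dominating summable sequence $\frac{\alpha^j}{1+\gamma}$ valid because $t-j+1\ge2$ on the summation range. Your route is self-contained and avoids the appendix machinery entirely; the paper's lemma, proved once by an $\varepsilon$-splitting argument, buys a reusable tool that handles the recursion without needing the closed-form sum. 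One caution on your parenthetical alternative: passing to the limit in the recursion and solving the fixed-point equation $L=(1-\gamma)\alpha+\alpha L$ presupposes that $\lim_t S(t)$ exists, which is exactly what the paper's Lemma~\ref{lemma:convergent_seq} establishes --- as a standalone argument it would be incomplete, but since your primary dominated-convergence argument proves existence and value simultaneously, your proof stands.
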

\begin{proof}
 We begin by computing 
 $S(t+1) = \sum_{i=1}^{t}  \frac{(1-\gamma)\alpha^{t+1-i}}{1 - \gamma^{i+1}} $, which can also be recursively expressed using $S(t)$ as
    \begin{align}
        &S(t+1) =  \frac{(1-\gamma)\alpha}{1-\gamma^{t+1}} + \sum_{i=1}^{t-1}  \frac{(1-\gamma)\alpha^{t+1-i}}{1 - \gamma^{i+1}}\nonumber\nonumber\\
        &\!\!{=}\frac{(1-\gamma)\alpha}{1-\gamma^{t+1}} + \alpha \sum_{i=1}^{t-1}  \frac{(1-\gamma)\alpha^{t-i}}{1 - \gamma^{i+1}} = \frac{(1-\gamma)\alpha}{1-\gamma^{t+1}} + \alpha S(t).\label{Discount_Sum_recursive}
    \end{align}
Note that by definition of $A_\gamma$, we have $A_\gamma \geq \frac{(1-\gamma^{t_0})S(t_0)}{1-\gamma}$, therefore it directly holds that $S(t_0) \leq \frac{A_\gamma(1-\gamma)}{1-\gamma^{t_0}}$. Next, we use the induction hypothesis and assume that $ S(t) \leq \frac{A_\gamma(1-\gamma)}{1-\gamma^t}$ for some arbitrary $t\geq t_0 =\lceil\ln(\frac{1-\alpha}{1+\alpha-2\gamma\alpha})/\ln(\gamma)\rceil$. Then, $S(t+1)$ can be upper bounded as $S(t+1) \leq  \frac{(1-\gamma)\alpha}{1-\gamma^{t+1}} + \alpha  \frac{A_\gamma(1-\gamma)}{1-\gamma^t}$. To prove the induction, it suffices to show that 
    the right-hand side above is bounded by $\frac{A_\gamma(1-\gamma)}{1-\gamma^{t+1}}$. Equivalently, after reorganizing the terms and simplifying:
\begin{align}
    A_\gamma\Big[
1-\alpha-\alpha\gamma^t\frac{
1-\gamma}{1-\gamma^t}
\Big]
\geq \alpha.
\label{acond2}
\end{align}
    To show that this condition holds true, note that for $t\geq t_0$ and since $\gamma<1$,
    $$
    \gamma^t\leq \gamma^{t_0}
    \leq
    \gamma^{\ln(\frac{1-\alpha}{1+\alpha-2\gamma\alpha})/\ln(\gamma)}
    =
\frac{1-\alpha}{1+\alpha-2\gamma\alpha}.
    $$
    Therefore, we can lower bound the left-hand side of \eqref{acond2} as
    $$
    A_\gamma\Big[
1-\alpha-\alpha\gamma^t\frac{
1-\gamma}{1-\gamma^t}
\Big]
$$$$
\geq
A_\gamma\Big[
1-\alpha-\alpha
\frac{1-\alpha}{1+\alpha-2\gamma\alpha}
\frac{
1-\gamma}{1-\frac{1-\alpha}{1+\alpha-2\gamma\alpha}}
\Big]
=
A_\gamma\frac{1-\alpha}{2},
    $$
    where the inequality is due to the bound on $\gamma^t$.
    Finally, by the definition of 
    $A_\gamma$, it holds that 
    $A_\gamma(\frac{1-\alpha}{2})\geq \alpha$, 
    so that \eqref{acond2} is satisfied.
We have thus proved that $S(t+1)\leq \frac{A_\gamma(1-\gamma)}{1-\gamma^{t+1}}$. By induction, the bound holds for all $t \geq t_0$.

Next, we apply Lemma \ref{lemma:convergent_seq} from the Appendix to the sequence $S(t)$ governed by \eqref{Discount_Sum_recursive} with  $b_t \triangleq \frac{(1-\gamma)\alpha}{1-\gamma^{t+1}}$. Since $b_t{\to}(1{-}\gamma)\alpha$ as $t\to \infty$, it immediately follows from this lemma that $\lim_{t \to \infty} S(t) = \frac{(1-\gamma)\alpha}{1-\alpha}$, which completes the proof.
\end{proof}
    
Using Proposition~\ref{prop:discount_sum}, the TE for the discounted weights in \eqref{discount_TE} can be bounded as
\begin{align}
    \text{TE}(t)&\leq \alpha^{t}\Vert \mathbf{w}_{0} - \overline{\mathbf{w}}^*_{1}\Vert + C' \frac{A_\gamma(1-\gamma)}{1-\gamma^t}, \ \forall t\geq t_0 \label{discount_TE_final}
\end{align}
with $A_\gamma$ and $t_0$ defined in Proposition~\ref{prop:discount_sum}.
\begin{remark}
\label{remark:ATE_discount}Since the first term in \eqref{discount_TE} decays geometrically, whereas the second term is a non-vanishing term, it can be concluded using Proposition~\ref{prop:discount_sum} that, with discounted weights, a non-vanishing ATE is achieved,\,i.e., 
\begin{align}
    \text{ATE}_\gamma \triangleq \lim\sup_{t \to \infty} \Vert \mathbf{w}_t - \overline{\mathbf{w}}^*_t  \Vert \leq   \left(1{+} \sqrt{\frac{L}{\mu}}\right) \frac{L C}{\mu}\cdot \frac{(1-\gamma)\alpha}{1-\alpha }.
\end{align}
    \end{remark}
\begin{remark}
\label{Remark:gamma_to_one}
Uniform weights treat every past sample equally, so as $t$ grows, the influence of any one new sample on the overall objective decays like $\mathcal{O}(1/t)$.  Equivalently, the minimizer drifts by $\mathcal{O}(1/t)$ each step (see \eqref{minimizer_drift_uniform}), and this drift, and hence the TE, vanishes asymptotically.  By contrast, with $\gamma$-discounting, old samples are exponentially forgotten.  The minimizer, therefore, continues to drift by an amount proportional to $1-\gamma=\mathcal{O}(1)$ whenever a new sample arrives (see \eqref{minimizer_drift_discount}), so the algorithm never fully catches up, causing a non-vanishing ATE. Moreover, it is interesting to note that as $\gamma \to 1$ the discounted weights converge to uniform weights, and correspondingly 
$\text{ATE}_\gamma \to 0$.
\end{remark}

\begin{prop}
    Let $0 < \gamma <1$, $0<\eta \leq \frac{2}{\mu + L }$, and $C' >0$. For a given choice of $\epsilon >0$, one can ensure $\text{ATE}_\gamma\leq \epsilon$ by performing 
    \begin{align}
         E \geq \frac{\ln\left(\frac{\epsilon}{C'(1-\gamma) + \epsilon}\right)}{\ln(1- \eta\mu)}
    \label{iter_complexity}
    \end{align}
    gradient updates in each time step. This leads to a gradient iteration complexity of $\mathcal{O}(\ln(\frac{1}{\epsilon}))$ (when $\epsilon\ll C'(1-\gamma)$) to achieve $\epsilon-$ATE.
\end{prop}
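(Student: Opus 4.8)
The plan is to start directly from the explicit asymptotic tracking-error bound already established in Remark~\ref{remark:ATE_discount}, namely
\begin{align}
\text{ATE}_\gamma \leq C'\,\frac{(1-\gamma)\alpha}{1-\alpha},
\end{align}
where $\alpha = (1-\eta\mu)^E$ and $C' = \left(1 + \sqrt{\frac{L}{\mu}}\right)\frac{LC}{\mu}$. The whole argument is an algebraic inversion of this bound. The first observation I would record is that the right-hand side is a strictly increasing function of $\alpha$ on $(0,1)$ (the numerator grows and the denominator shrinks as $\alpha\to1$), while $\alpha=(1-\eta\mu)^E$ is strictly decreasing in $E$ since $0<1-\eta\mu<1$. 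Hence increasing $E$ drives the bound down monotonically, so it suffices to find the threshold value of $E$ at which the bound first drops to $\epsilon$.

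First I would impose $C'\frac{(1-\gamma)\alpha}{1-\alpha}\leq\epsilon$ and solve the resulting linear inequality for $\alpha$. Clearing the denominator gives $C'(1-\gamma)\alpha\leq\epsilon(1-\alpha)$, and collecting the $\alpha$ terms yields $\alpha\big[C'(1-\gamma)+\epsilon\big]\leq\epsilon$, i.e.
\begin{align}
\alpha \leq \frac{\epsilon}{C'(1-\gamma)+\epsilon}.
\end{align}
Next I would substitute $\alpha=(1-\eta\mu)^E$ and take natural logarithms of both sides. The single point requiring care — and what I expect to be the only subtlety — is the sign of $\ln(1-\eta\mu)$: under the step-size restriction $\eta\leq\frac{2}{\mu+L}$ together with the standard fact $\mu\leq L$, one has $\eta\mu\leq1$, and in the nondegenerate regime $0<\eta\mu<1$ so that $0<1-\eta\mu<1$ and $\ln(1-\eta\mu)<0$. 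Dividing both sides by this negative quantity reverses the inequality, producing exactly the claimed threshold
\begin{align}
E \geq \frac{\ln\!\left(\frac{\epsilon}{C'(1-\gamma)+\epsilon}\right)}{\ln(1-\eta\mu)}.
\end{align}

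Finally, for the complexity claim I would examine the regime $\epsilon\ll C'(1-\gamma)$, in which the argument of the logarithm satisfies $\frac{\epsilon}{C'(1-\gamma)+\epsilon}\approx\frac{\epsilon}{C'(1-\gamma)}$, so the numerator behaves like $\ln\epsilon-\ln\!\big(C'(1-\gamma)\big)$. Since $\ln(1-\eta\mu)$ is a fixed negative constant independent of $\epsilon$, the right-hand side grows like $\frac{\ln(1/\epsilon)}{\lvert\ln(1-\eta\mu)\rvert}$ as $\epsilon\to0$, yielding the stated $\mathcal{O}(\ln(1/\epsilon))$ iteration complexity. I do not anticipate any genuine obstacle: the derivation is a monotone inversion of the closed-form ATE bound, and the only thing to watch is the sign reversal when dividing by the negative logarithm, which is precisely what turns the upper bound on $\alpha$ into the lower bound "$\geq$" on $E$.
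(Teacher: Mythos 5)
Your proposal is correct and is exactly the intended derivation: the paper states this proposition without an explicit proof, and the evident route is precisely your algebraic inversion of the asymptotic bound $\text{ATE}_\gamma \leq C'\frac{(1-\gamma)\alpha}{1-\alpha}$ from Remark~\ref{remark:ATE_discount}, solving for $\alpha=(1-\eta\mu)^E$ and flipping the inequality when dividing by $\ln(1-\eta\mu)<0$. Your handling of the sign reversal and of the small-$\epsilon$ asymptotics for the $\mathcal{O}(\ln(1/\epsilon))$ claim is sound, so there is nothing to add.
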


\section{Numerical Results}
\label{sec:results}
In this section, we perform experimentation to numerically verify the efficacy of our TE analysis
for the considered time-varying learning problem. We investigate the TE for both the uniform and discounted weights using scalar quadratic loss functions. 
In particular, the loss function associated with the data arrived in iteration $t$ is given by 
\begin{align*}
     f_t(w)= \frac{\mu}{2} \left(w - c_t\right)^2.
\end{align*}
It is straightforward to see that $f_t(\cdot)$ is minimized at $w_t^* = c_t$, and the global objective $F_t(\cdot)$ is minimized at $\overline{w}^*_{t} = \sum_{i=1}^t a_i(t) c_i$, where $\{a_i(t)\}$ denote the weighting coefficients. We generate $\{c_t\}$ according to a bounded random walk process, that has the form:
\begin{align}
    c_{t+1} = \max(-C_\text{max},\min(c_t + z_{t+1},C_\text{max})), 
\end{align}
for all $t = 0,1,\cdots,$ where we let $z_t \sim N(0,\sigma^2)$ for all $t\geq1$, and we set $c_0 = 0$. We set $C_\text{max} = 100$, $\sigma^2 = 100$, and $\mu = 0.1$. Note that the above choice of loss functions $\{f_t\}$ ensures that the global objective $F_t(\cdot)$ satisfies both Assumptions~\ref{ass:smooth_sc} and ~\ref{ass:bounded_minimizers} with $C=C_\text{max}$.

\begin{figure}[t]
\centering
\includegraphics[width=0.45\textwidth]{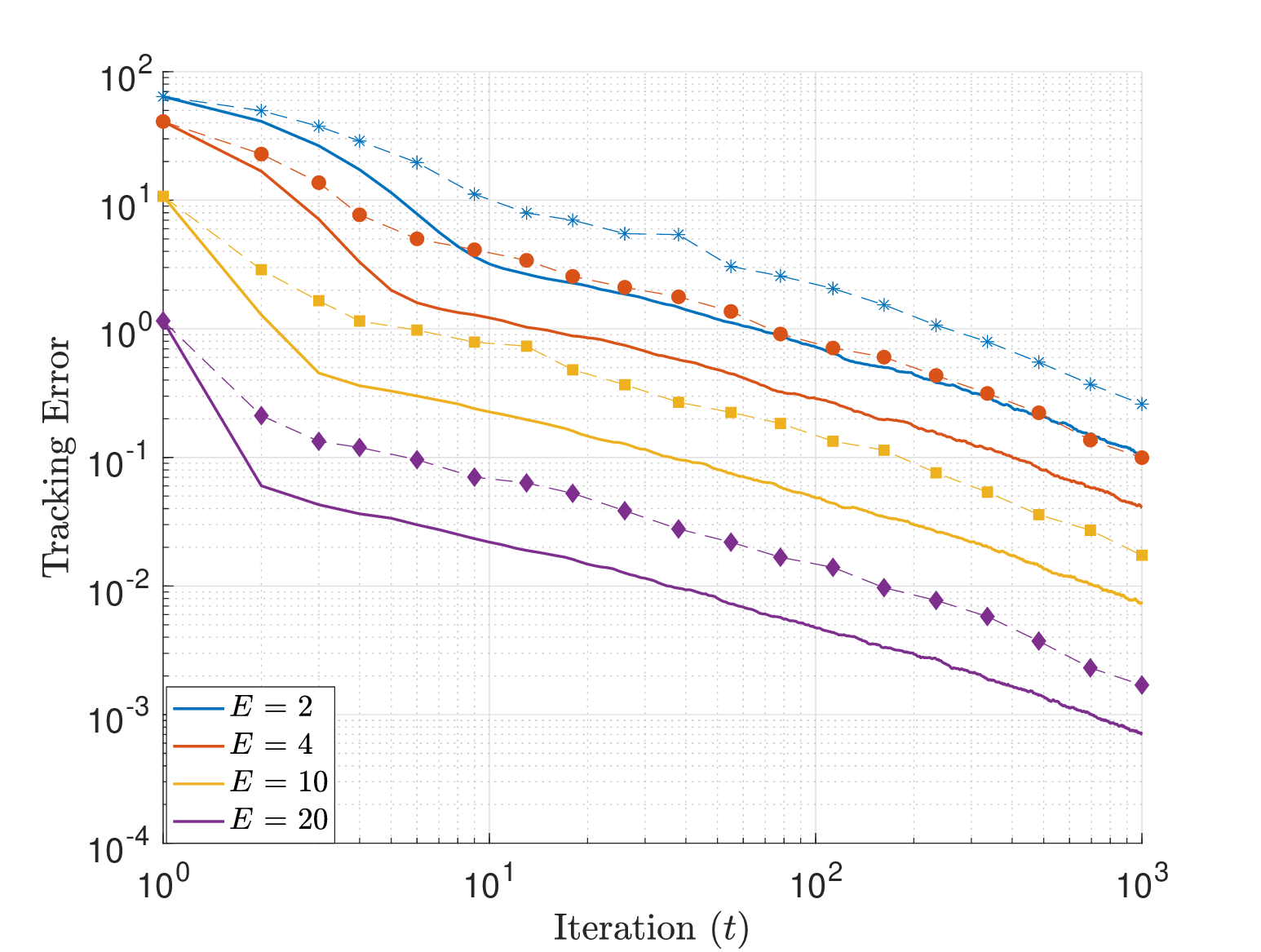}
\caption{Root mean squared (\textbf{solid lines}) and maximum tracking error (\textbf{dashed lines with markers}) vs. iterations for uniform weights, $\mu=0.1, \eta=2$. 
}
\label{Fig:TE_uniform}
\end{figure}

Fig.~\ref{Fig:TE_uniform} plots the root mean squared (RMS) and the maximum (worst-case) TE (over 1000 independent and identical runs) vs. iterations for the case of uniform weights, considering different choices of the number of gradient updates $E$. As the iteration index $t$ increases, both the RMS and the worst‐case TE curves clearly exhibit a monotonic decay consistent with the $\mathcal{O}(1/t)$ rate established in equation \eqref{uniform_TE_final}. As expected, increasing the number of gradient updates $E$ accelerates the TE decay rate. This is because a larger $E$ reduces the contraction factor $\alpha = (1 - \eta\mu)^E$, thereby diminishing both the initialization error and the accumulated drift-induced error in \eqref{uniform_TE_final} more rapidly. We also note that for large $t$, the TE becomes dominated by the residual error from minimizer drift, which scales as $\mathcal{O}(1/t)$. Notably, doubling $E$ from 10 to 20 reduces the empirical TE by an improvement factor of $(1 - \eta\mu)^{20-10} \big[\frac{(1 - \eta\mu)^{10}}{(1 - \eta\mu)^{20}}\big]\approx 0.09$ (with $\eta = 2, \mu = 0.1$), quantitatively matching the theoretical prediction according to \eqref{uniform_TE_final}.
\begin{figure}[t]
\centering
\includegraphics[width=0.45\textwidth]{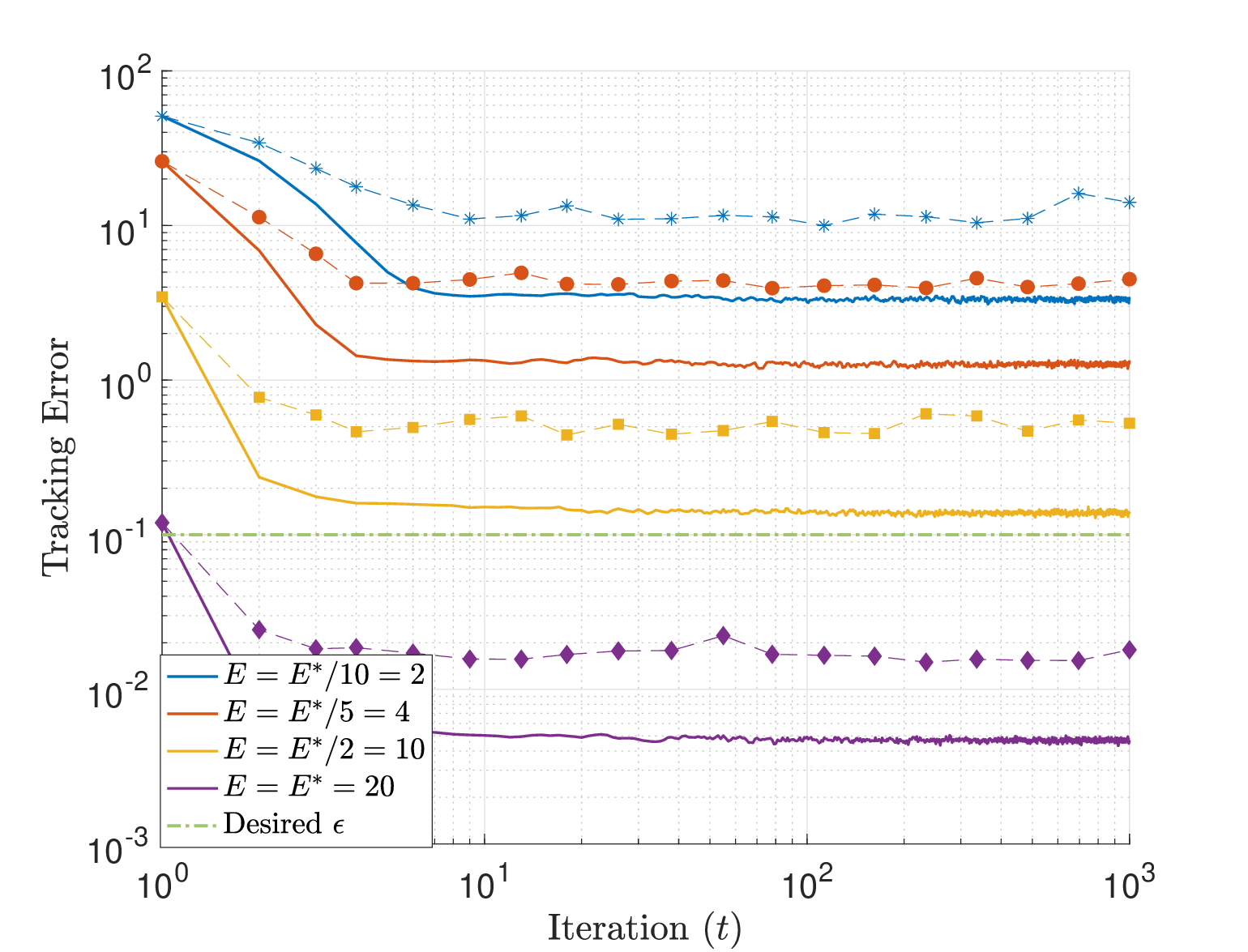}
\caption{Root mean squared (\textbf{solid lines}) and maximum tracking error (\textbf{dashed lines with markers}) vs. iterations for discounted weights, $\gamma = 0.7, \mu=0.1, \eta=2.85$.}
\label{Fig:TE_discount}
\end{figure}

RMS and the worst-case TE for the discounted case are plotted in Fig.~\ref{Fig:TE_discount} with different choices of the number of gradient updates $E$. The minimum $E^*$ derived from~\eqref{iter_complexity} to guarantee $\epsilon = 0.1$ ATE ensures the error floor remains below $\epsilon$, whereas the considered choices with $E < E^*$ violate the desired condition. Consistent with Remark~\ref{remark:ATE_discount}, both error metrics converge to a non-vanishing asymptotic error floor. Finally, we observe that doubling $E$ from 10 to 20 causes a drop of $(1 - \eta\mu)^{20-10} \big[\frac{(1 - \eta\mu)^{10}}{(1 - \eta\mu)^{20}}\big]\approx 0.03$ (with $\eta = 2.85, \mu = 0.1$) in the ATE, which matches with our theoretical bound in \eqref{discount_TE_final}. 
\begin{figure}[t]
\centering
\includegraphics[width=0.45\textwidth]{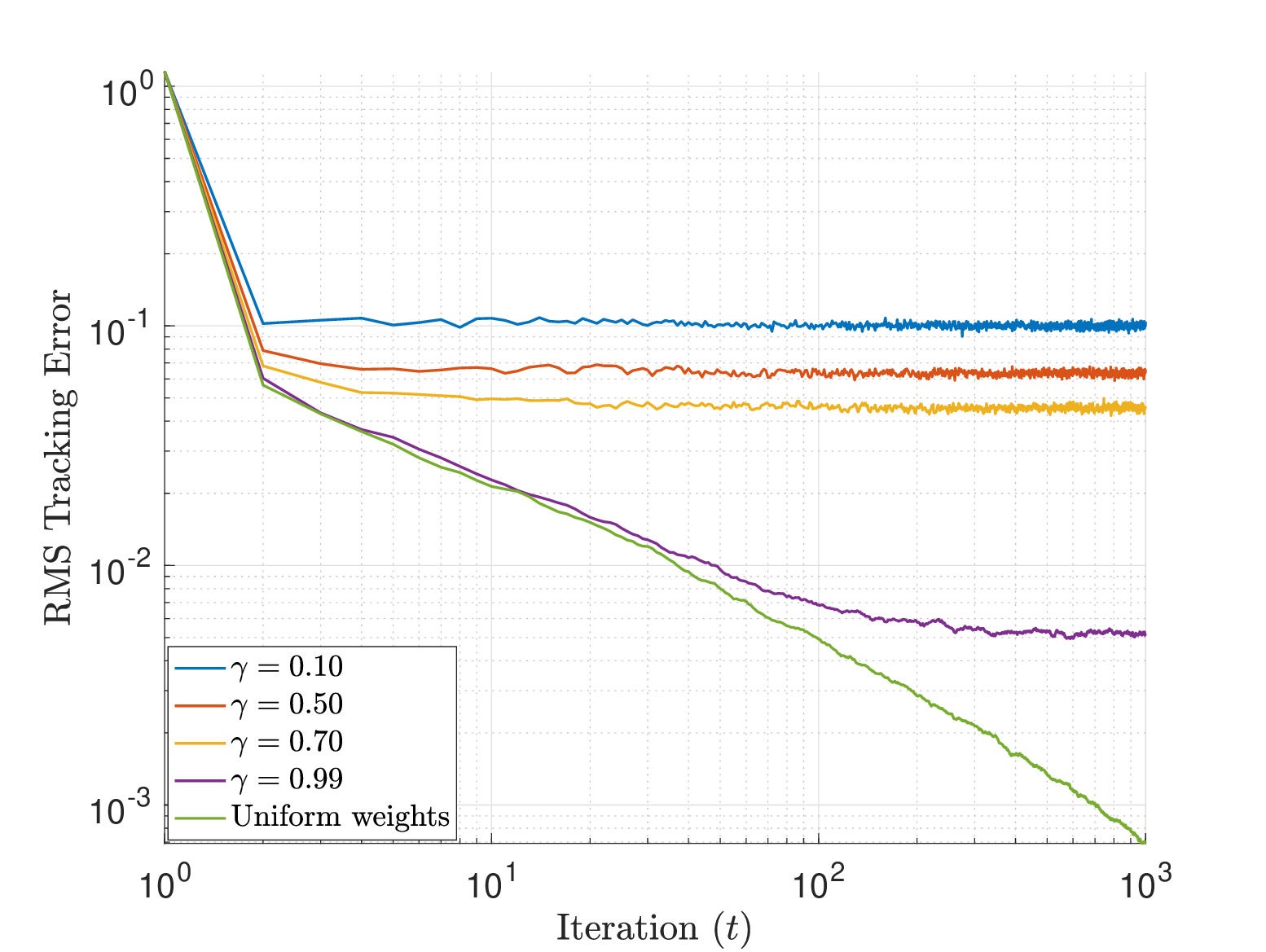}
\caption{Root mean squared TE vs. iterations for varying discount factor $\gamma$.}
\label{Fig:TE_discount_diff_gamma}
\end{figure}

Fig.~\ref{Fig:TE_discount_diff_gamma} plots the TE against iterations for the discounted weights for various choices of the discount factor $\gamma$. Although all curves approach a nonzero asymptotic error, larger $\gamma$ yields a lower ATE. This behavior matches the intuition that smaller $\gamma$ discounts the past samples more heavily, causing the minimizer to drift more rapidly and consequently a higher TE. Fig.~\ref{Fig:TE_discount_diff_gamma} also verifies Remark \ref{Remark:gamma_to_one} empirically, where for $\gamma=0.99$, the TE closely matches the uniform weight case.

\section{Conclusion}
\label{sec:conclusion}
In this paper, we established theoretical guarantees for time-varying optimization specialized to the ML setting where data arrives in an online streaming fashion. We captured this by formulating an objective that evolves as a weighted average of past losses. We analyzed gradient-descent updates under two canonical weighting schemes, uniform weights, which yield a vanishing TE that decays as $\mathcal{O}(1/t)$, and geometrically discounted weights, leading to a non-vanishing asymptotic TE floor explicitly characterized by the discount factor and gradient iterations. Our weight-specific analysis yields tighter, interpretable expressions that clarify how the weighting scheme and the per-time-step gradient update budget shape the asymptotic error, and we confirm these predictions through numerical simulations.


\appendix
\begin{lemma}
\label{lemma:MHT}
Let $g:\mathbb{R}^d\to\mathbb{R}$ be a differentiable function that satisfies Assumption \ref{ass:smooth_sc}. Then,  for all $\mathbf{x},\mathbf{y} \in \mathbb{R}^d$, there exists a positive definite matrix $\mathbf{A}$ such that 
\begin{align*}
\nabla g(\mathbf{y})-\nabla g(\mathbf{x})=\mathbf{A}(\mathbf{y}-\mathbf{x}),
\end{align*} 
where the eigenvalues of $\mathbf{A}$ lie in the interval $ [\mu, L] $. Moreover, for $ 0 < \eta \leq \frac{2}{\mu + L}$ it holds that 
\begin{align*}
\Vert \mathbf{y} - \mathbf{x} - \eta \left( \nabla g(\mathbf{y}) - \nabla g(\mathbf{x})\right)\Vert \leq (1-\eta\mu)\Vert  \mathbf{y} - \mathbf{x}\Vert.
\end{align*}
\end{lemma}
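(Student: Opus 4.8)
The plan is to prove the two assertions in order: first establish the symmetric matrix representation $\mathbf{A}$ with spectrum in $[\mu,L]$, and then reduce the contraction inequality to a scalar spectral estimate over that interval. The step-size hypothesis $0<\eta\le\frac{2}{\mu+L}$ will enter only in the second part, at a single endpoint comparison.

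For the representation, I would use the fundamental theorem of calculus along the segment joining $\mathbf{x}$ and $\mathbf{y}$. Setting
$$\mathbf{A}\triangleq\int_0^1\nabla^2 g\big(\mathbf{x}+s(\mathbf{y}-\mathbf{x})\big)\,ds,$$
integration of $\frac{d}{ds}\nabla g(\mathbf{x}+s(\mathbf{y}-\mathbf{x}))$ over $s\in[0,1]$ gives $\nabla g(\mathbf{y})-\nabla g(\mathbf{x})=\mathbf{A}(\mathbf{y}-\mathbf{x})$. Each Hessian $\nabla^2 g(\cdot)$ is symmetric, so $\mathbf{A}$ is symmetric. The eigenvalue bounds then follow directly from Assumption~\ref{ass:smooth_sc}: $\mu$-strong convexity yields $\nabla^2 g\succeq\mu\mathbf{I}$ and $L$-smoothness yields $\nabla^2 g\preceq L\mathbf{I}$ pointwise, and these semidefinite bounds are preserved by the integral, giving $\mu\mathbf{I}\preceq\mathbf{A}\preceq L\mathbf{I}$. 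Hence $\mathbf{A}$ is positive definite with eigenvalues in $[\mu,L]$.

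For the contraction, I would write, using the representation just obtained, $\mathbf{y}-\mathbf{x}-\eta(\nabla g(\mathbf{y})-\nabla g(\mathbf{x}))=(\mathbf{I}-\eta\mathbf{A})(\mathbf{y}-\mathbf{x})$, so that
$$\big\|\mathbf{y}-\mathbf{x}-\eta(\nabla g(\mathbf{y})-\nabla g(\mathbf{x}))\big\|\le\|\mathbf{I}-\eta\mathbf{A}\|_2\,\|\mathbf{y}-\mathbf{x}\|.$$
Since $\mathbf{A}$ is symmetric with eigenvalues $\lambda\in[\mu,L]$, the matrix $\mathbf{I}-\eta\mathbf{A}$ is symmetric with eigenvalues $1-\eta\lambda$, whence $\|\mathbf{I}-\eta\mathbf{A}\|_2=\max_{\lambda\in[\mu,L]}|1-\eta\lambda|$. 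It remains to show this maximum equals $1-\eta\mu$ for the stated range of $\eta$. As $\lambda\mapsto|1-\eta\lambda|$ is convex, the maximum over $[\mu,L]$ is attained at an endpoint, so I would verify $|1-\eta\mu|=1-\eta\mu$ (using $\eta\mu<1$) and $|1-\eta L|\le1-\eta\mu$; the latter splits into $1-\eta L\le1-\eta\mu$ (immediate from $\mu\le L$) and $\eta L-1\le1-\eta\mu$, which is exactly equivalent to $\eta\le\frac{2}{\mu+L}$. Combining these gives $\|\mathbf{I}-\eta\mathbf{A}\|_2\le1-\eta\mu$ and the claim.

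I expect the main obstacle to be the representation step, not the contraction. The clean integral-Hessian argument tacitly invokes twice differentiability, whereas the hypothesis only assumes $g$ differentiable with Lipschitz gradient. To stay within the stated hypotheses, I would either (i) observe that $L$-smoothness makes $\nabla g$ Lipschitz, so $\nabla^2 g$ exists almost everywhere and the integral representation holds through a mollification and limiting argument, or (ii) construct a symmetric $\mathbf{A}$ directly in the two-dimensional span of $\mathbf{y}-\mathbf{x}$ and $\nabla g(\mathbf{y})-\nabla g(\mathbf{x})$ that reproduces the correct action on $\mathbf{y}-\mathbf{x}$ while keeping its spectrum in $[\mu,L]$, using the inequalities $\langle\mathbf{y}-\mathbf{x},\nabla g(\mathbf{y})-\nabla g(\mathbf{x})\rangle\ge\mu\|\mathbf{y}-\mathbf{x}\|^2$ and $\|\nabla g(\mathbf{y})-\nabla g(\mathbf{x})\|\le L\|\mathbf{y}-\mathbf{x}\|$. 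Once the representation with the correct spectral window is secured, the contraction bound is routine.
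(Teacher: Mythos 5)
Your proof is correct, but note that the paper does not actually prove this lemma inline: its ``proof'' is a one-line citation of the mean Hessian theorem from Section~II.E of \cite{MHT}. What you have written is essentially a self-contained reconstruction of that cited result. Your contraction step is exactly right and complete: $\mathbf{y}-\mathbf{x}-\eta(\nabla g(\mathbf{y})-\nabla g(\mathbf{x}))=(\mathbf{I}-\eta\mathbf{A})(\mathbf{y}-\mathbf{x})$, the spectral norm $\|\mathbf{I}-\eta\mathbf{A}\|_2=\max_{\lambda\in[\mu,L]}|1-\eta\lambda|$ is controlled at the endpoints, and the only nontrivial endpoint check, $\eta L-1\le 1-\eta\mu$, is precisely equivalent to $\eta\le\frac{2}{\mu+L}$ --- correctly isolating where the step-size hypothesis enters. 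You also correctly spotted the one genuine subtlety, namely that Assumption~\ref{ass:smooth_sc} grants only a Lipschitz gradient, not twice differentiability, so the clean formula $\mathbf{A}=\int_0^1\nabla^2 g(\mathbf{x}+s(\mathbf{y}-\mathbf{x}))\,ds$ is not immediately licensed. Your fix (i) via mollification is sound: $g_\varepsilon=g*\rho_\varepsilon$ is smooth and inherits $\mu\mathbf{I}\preceq\nabla^2 g_\varepsilon\preceq L\mathbf{I}$, the resulting matrices $\mathbf{A}_\varepsilon$ live in the compact set of symmetric matrices with spectrum in $[\mu,L]$, and passing to a convergent subsequence together with $\nabla g_\varepsilon\to\nabla g$ locally uniformly yields the representation in the limit. (One refinement: the Rademacher ``$\nabla^2 g$ exists a.e.''~observation alone would not suffice, since the segment from $\mathbf{x}$ to $\mathbf{y}$ is a null set in $\mathbb{R}^d$ for $d\ge 2$; the mollification route you mention is genuinely needed, not optional polish.)

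One concrete caveat on your alternative fix (ii): the two inequalities you propose to use, $\langle\mathbf{y}-\mathbf{x},\nabla g(\mathbf{y})-\nabla g(\mathbf{x})\rangle\ge\mu\|\mathbf{y}-\mathbf{x}\|^2$ and $\|\nabla g(\mathbf{y})-\nabla g(\mathbf{x})\|\le L\|\mathbf{y}-\mathbf{x}\|$, are \emph{not} sufficient to build a symmetric $\mathbf{A}$ with spectrum in $[\mu,L]$ mapping $\mathbf{u}=\mathbf{y}-\mathbf{x}$ to $\mathbf{v}=\nabla g(\mathbf{y})-\nabla g(\mathbf{x})$. A necessary condition is $\langle\mathbf{v}-\mu\mathbf{u},\,L\mathbf{u}-\mathbf{v}\rangle\ge 0$ (since $(\mathbf{A}-\mu\mathbf{I})(L\mathbf{I}-\mathbf{A})\succeq\mathbf{0}$ for commuting PSD factors), and the pair $\mathbf{u}=\mathbf{e}_1$, $\mathbf{v}=\mu\mathbf{e}_1+c\,\mathbf{e}_2$ with small $c>0$ satisfies your two inequalities while violating it: any PSD matrix $\mathbf{A}-\mu\mathbf{I}$ with $\langle(\mathbf{A}-\mu\mathbf{I})\mathbf{u},\mathbf{u}\rangle=0$ must annihilate $\mathbf{u}$, forcing $\mathbf{v}=\mu\mathbf{u}$. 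The missing ingredient is the interpolation inequality $\langle\mathbf{v}-\mu\mathbf{u},\,L\mathbf{u}-\mathbf{v}\rangle\ge 0$, which follows from co-coercivity of $\nabla g-\mu\,\mathrm{id}$ (equivalently, convexity and $(L-\mu)$-smoothness of $g-\frac{\mu}{2}\|\cdot\|^2$); with it, the two-dimensional construction does go through. Since you present (ii) only as an alternative and route (i) is complete, your overall proof stands.
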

\begin{proof}
This is the mean Hessian theorem, stated and proved in Section II.E of \cite{MHT}.
\end{proof}
\begin{lemma}
\label{lemma:glob_minimizer_bnd}
Under Assumptions \ref{ass:smooth_sc} and \ref{ass:bounded_minimizers}, the set of global minimizers $\{\overline{\mathbf{w}}^*_{t}\}_{t\geq1}$ is uniformly bounded and satisfies:
\begin{align}
    \Vert \overline{\mathbf{w}}^*_{t} \Vert^2 
    &\leq \frac{L}{\mu} C^2,
\end{align}
\end{lemma}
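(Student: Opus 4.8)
The plan is to bound $\|\overline{\mathbf{w}}^*_t\|$ by ``sandwiching'' the optimal value $F_t(\overline{\mathbf{w}}^*_t)$ between a strong-convexity lower bound anchored at the origin and a smoothness upper bound anchored at the individual sample minimizers $\{\mathbf{w}^*_i\}$. Note that the uniform-boundedness claim comes for free, since the target bound $\frac{L}{\mu}C^2$ is independent of $t$; it suffices to establish the displayed inequality for every fixed $t$.

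First I would apply the $\mu$-strong convexity of $F_t$ (Assumption~\ref{ass:smooth_sc}) with base point $\mathbf{x}=\overline{\mathbf{w}}^*_t$ and test point $\mathbf{y}=\mathbf{0}$. Using the optimality condition $\nabla F_t(\overline{\mathbf{w}}^*_t)=\mathbf{0}$, the first-order term drops out, giving
$$\frac{\mu}{2}\|\overline{\mathbf{w}}^*_t\|^2 \leq F_t(\mathbf{0}) - F_t(\overline{\mathbf{w}}^*_t).$$
This reduces the problem to showing that the suboptimality gap $F_t(\mathbf{0}) - F_t(\overline{\mathbf{w}}^*_t)$ is at most $\frac{L}{2}C^2$.

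Next I would lower-bound the optimal value term-by-term. Because each $f_i$ is minimized at its own $\mathbf{w}^*_i$, we have $f_i(\overline{\mathbf{w}}^*_t)\geq f_i(\mathbf{w}^*_i)$, hence $F_t(\overline{\mathbf{w}}^*_t)=\sum_i a_i(t)f_i(\overline{\mathbf{w}}^*_t)\geq \sum_i a_i(t)f_i(\mathbf{w}^*_i)$. Substituting this together with $F_t(\mathbf{0})=\sum_i a_i(t)f_i(\mathbf{0})$ yields
$$F_t(\mathbf{0}) - F_t(\overline{\mathbf{w}}^*_t)\;\leq\;\sum_{i=1}^t a_i(t)\bigl[f_i(\mathbf{0})-f_i(\mathbf{w}^*_i)\bigr].$$
I would then invoke $L$-smoothness of each $f_i$ at base point $\mathbf{w}^*_i$ (again using $\nabla f_i(\mathbf{w}^*_i)=\mathbf{0}$) to obtain $f_i(\mathbf{0})-f_i(\mathbf{w}^*_i)\leq \frac{L}{2}\|\mathbf{w}^*_i\|^2\leq \frac{L}{2}C^2$ by Assumption~\ref{ass:bounded_minimizers}. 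Since $\sum_i a_i(t)=1$, the convex combination collapses to $\frac{L}{2}C^2$, and chaining with the previous display gives $\frac{\mu}{2}\|\overline{\mathbf{w}}^*_t\|^2\leq \frac{L}{2}C^2$, i.e., the claim.

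The main subtlety is conceptual rather than computational: one must recognize that the global minimum value can be lower-bounded by the weighted sum of the individual sample minima $\sum_i a_i(t)f_i(\mathbf{w}^*_i)$. This is precisely what sharpens the bound to $\sqrt{L/\mu}\,C$ on the norm; the cruder alternative of bounding $\|\nabla F_t(\mathbf{0})\|\leq LC$ and applying $\mu\|\overline{\mathbf{w}}^*_t\|\leq\|\nabla F_t(\mathbf{0})\|$ would only deliver the weaker factor $L/\mu$ in the norm. Everything else is a mechanical application of the two defining inequalities of Assumption~\ref{ass:smooth_sc} evaluated at the origin, so I anticipate no genuine obstacle.
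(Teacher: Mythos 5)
Your proof is correct and takes essentially the same route as the paper's: $\mu$-strong convexity of $F_t$ anchored at the origin (with $\nabla F_t(\overline{\mathbf{w}}^*_t)=\mathbf{0}$ killing the linear term), the term-wise lower bound $F_t(\overline{\mathbf{w}}^*_t)\geq\sum_{i=1}^t a_i(t)f_i(\mathbf{w}^*_i)$, and the $L$-smoothness estimate $f_i(\mathbf{0})-f_i(\mathbf{w}^*_i)\leq\frac{L}{2}\Vert\mathbf{w}^*_i\Vert^2\leq\frac{L}{2}C^2$ combined via $\sum_i a_i(t)=1$. There are no gaps; your closing observation about why this sandwich sharpens the bound over the $\Vert\nabla F_t(\mathbf{0})\Vert$ route is a correct aside, not part of the argument.
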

\begin{proof}
Using the $\mu$-strong convexity of $F_t(\cdot)$, we have
\begin{align}
    \Vert \overline{\mathbf{w}}^*_{t} \Vert^2 \leq \frac{2}{\mu}\left(F_t(\mathbf{0}) - F_t(\overline{\mathbf{w}}^*_{t})\, \right)
    \label{global_minimizer_bnd}
\end{align}
Furthermore, by using $L$-smoothness of $f_i(\cdot)$, we have
\begin{align*}
    f_i(\mathbf{0}) \leq f_i(\mathbf{w}_i^*) + \frac{L}{2} \Vert \mathbf{w}_i^*\Vert^2 \leq f_i(\mathbf{w}_i^*) + \frac{L}{2} C^2,
\end{align*}
where the second inequality invokes Assumption~\ref{ass:bounded_minimizers}. Averaging over $i=1,\cdots,t$ with weights $a_i(t)\geq0$ and $\sum_{i=1}^ta_i(t) =1$, we obtain:
\begin{align}
    F_t(\mathbf{0})&\leq\sum_{i=1}^t a_i(t)f_i(\mathbf{w}_i^*) + \frac{L}{2} C^2 \nonumber\leq \sum_{i=1}^t a_i(t)f_i(\overline{\mathbf{w}}^*_{t}) + \frac{L}{2} C^2, \nonumber
\end{align}
where the second inequality follows since $\mathbf{w}_i^*$ minimizes $f_i(\cdot)$, which combined with \eqref{global_minimizer_bnd} completes the proof. 
\end{proof}
\begin{lemma}
\label{lemma:convergent_seq}
Let $0<\alpha<1$ and let $\{b_t\}_{t\ge0}$ be a sequence of real numbers that satisfies $b_t\to b^*$ as $t \to \infty$.
Define the sequence $\{x_t\}_{t\ge0}$ by
$$
x_{t+1}=\alpha x_t+ b_t,\qquad x_0\in\mathbb{R}.
$$
Then, $\{x_t\}$ is also a convergent sequence which satisfies 
\[
\lim_{t\to\infty}x_t =\frac{b^*}{1-\alpha}\,.
\]
\end{lemma}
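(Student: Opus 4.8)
The plan is to reduce the given affine recursion to a homogeneous-style recursion for the error relative to the candidate limit, and then recognize that error as a geometric convolution of a vanishing sequence. First I would identify the candidate limit $L \triangleq \frac{b^*}{1-\alpha}$ as the unique fixed point of the affine map $x \mapsto \alpha x + b^*$, i.e.\ the value satisfying $L = \alpha L + b^*$. Defining the error $e_t \triangleq x_t - L$ and substituting into $x_{t+1} = \alpha x_t + b_t$ gives
$$e_{t+1} = \alpha e_t + (b_t - b^*),$$
since the constant terms cancel by the fixed-point identity. Writing $\epsilon_t \triangleq b_t - b^*$, the hypothesis $b_t \to b^*$ is precisely $\epsilon_t \to 0$, so it remains only to show that a sequence $\{e_t\}$ obeying $e_{t+1} = \alpha e_t + \epsilon_t$ with $\epsilon_t \to 0$ converges to $0$.

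Next I would unroll this linear recursion to its closed form
$$e_t = \alpha^t e_0 + \sum_{k=0}^{t-1} \alpha^{t-1-k}\epsilon_k.$$
The first term vanishes geometrically because $0 < \alpha < 1$, so the entire task collapses to proving that the geometric convolution $\sum_{k=0}^{t-1}\alpha^{t-1-k}\epsilon_k$ tends to $0$ as $t \to \infty$.

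This last step is the main obstacle, although it is a standard Toeplitz/Abelian-type estimate. Given any $\delta > 0$, I would invoke $\epsilon_k \to 0$ to choose $N$ with $|\epsilon_k| < \delta$ for all $k \geq N$, and split the sum at index $N$. The tail $\sum_{k=N}^{t-1}\alpha^{t-1-k}\epsilon_k$ is bounded in magnitude by $\delta \sum_{j=0}^{\infty}\alpha^j = \frac{\delta}{1-\alpha}$ uniformly in $t$, while the head $\sum_{k=0}^{N-1}\alpha^{t-1-k}\epsilon_k$ carries a factor $\alpha^{t-1-k} \leq \alpha^{t-N}$ that drives it to $0$ as $t \to \infty$ for the fixed finite number $N$ of terms. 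Hence $\limsup_{t\to\infty}\big|\sum_{k=0}^{t-1}\alpha^{t-1-k}\epsilon_k\big| \leq \frac{\delta}{1-\alpha}$, and letting $\delta \downarrow 0$ forces the convolution to $0$. Combined with the geometric decay of $\alpha^t e_0$, this yields $e_t \to 0$, i.e.\ $x_t \to L = \frac{b^*}{1-\alpha}$, completing the argument.
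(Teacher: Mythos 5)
Your proof is correct and takes essentially the same route as the paper's: both unroll the recursion into a geometric term plus the convolution $\sum_{k=0}^{t-1}\alpha^{t-1-k}(b_k-b^*)$, and both kill the convolution by splitting at an index $N$ beyond which $|b_k-b^*|<\delta$, bounding the tail by $\delta/(1-\alpha)$ and letting the finitely many head terms decay geometrically. Your preliminary change of variables $e_t = x_t - \frac{b^*}{1-\alpha}$ (cancelling the constant via the fixed-point identity) and the $\limsup$-then-$\delta\downarrow 0$ phrasing are merely cosmetic streamlinings of the paper's explicit $\varepsilon/2+\varepsilon/4+\varepsilon/4$ bookkeeping.
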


\begin{proof}
By induction on $t$, we can express $x_t$ as 
\begin{equation}\label{eq:unroll}
x_t=\alpha^t x_0+\sum_{k=0}^{t-1}\alpha^{\,t-1-k}b_k
\quad ,\;\;t\geq 1.
\end{equation}
Since $b_k \to b^*$, we express $b_k$ as $b_k=b^*+e_k$ with $e_k \triangleq b_k-b^*\to0$. Subtracting the candidate limit $\frac{b^*}{1-\alpha}$ from both sides of \eqref{eq:unroll}, we have
\[
\begin{aligned}
x_t-\frac{b^*}{1-\alpha}
&=\alpha^t x_0
+\sum_{k=0}^{t-1}\alpha^{t-1-k}e_k
+ b^* \sum_{k=0}^{t-1}\alpha^{t-1-k} -\frac{b^*}{1-\alpha}\\
 &= \alpha^t(x_0 - \frac{b^*}{1-\alpha}) +\sum_{k=0}^{t-1}\alpha^{t-1-k}e_k  \\
&\triangleq A_t+B_t.
\end{aligned}
\]
Next, we will show that for every $\varepsilon>0$ there exists a $\tau$ such that
$|A_t|+|B_t|<\varepsilon$ for all $t \geq \tau$, that is $x_t \to \frac{b^*}{1-\alpha}$.
Let  $\varepsilon >0$ be given. Since $\alpha^t\to0$, there exists some $\tau_1 > 0$ such that
$|A_t|=\alpha^t|x_0 - \frac{b^*}{1-\alpha}|<\varepsilon/2$ for all $t\geq \tau_1$. Next, to bound $|B_t|$, we split the term $B_t$ as $B_t=H_t+T_t$ where
$$
H_t\triangleq \sum_{k=0}^{N-1}\alpha^{t-1-k}e_k,
\qquad
T_t \triangleq\sum_{k=N}^{t-1}\alpha^{t-1-k}e_k, \quad t \geq 1.
$$
Define $\delta \triangleq \frac{(1-\alpha)\varepsilon}{4} >0$. Since $e_k\to0$, there
exists  $N$ such that
$|e_k|\leq\delta\,, \forall k\geq N.$ With this, the tail term $T_t$ can be bounded as 
$$
|T_t|
\leq \delta\sum_{k=N}^{t-1}\alpha^{t-1-k} 
\leq \delta\sum_{j=0}^{\infty}\alpha^j
=\frac{\delta}{1-\alpha}
=\frac{\varepsilon}{4}.
$$
To bound the head term $H_t$, we use the fact that, since $e_t\to 0$, it is bounded by some $\mathcal{E} >0$ ($|e_t|\leq \mathcal{E},\forall t$), so that 
$$
|H_t|
\leq \sum_{k=0}^{N-1}\alpha^{t-1-k} |e_k| \leq  \mathcal{E} \sum_{k=0}^{N-1}\alpha^{t-1-k}
\leq \frac{\ \mathcal{E}}{1-\alpha}\,\alpha^{t-N}.
$$
Next, since $ \alpha^{t-N} \to 0, \text{ as } t \to \infty$, there exists a $\tau_2\geq N$ such that $|H_t|\leq \varepsilon/4$ for all $t\geq \tau_2$. Finally, choosing $\tau = \max\{\tau_1,\tau_2\}$, the following holds for all $t\geq \tau$,
$$
|x_t-\tfrac{b^*}{1-\alpha}|  
\leq |A_t|+|H_t|+|T_t|
< \frac{\varepsilon}{2}+\frac{\varepsilon}{4}+\frac{\varepsilon}{4}
=\varepsilon.
$$
Overall, we have proved that, for any $\varepsilon>0$, there exists a $\tau$ such that $\forall t\geq \tau$, $|x_t-\tfrac{b^*}{1-\alpha}|<\varepsilon$, thus proving the lemma.
\end{proof}

\bibliographystyle{IEEEtran}
\bibliography{sample}
\balance
\end{document}